\documentclass[letterpaper,10pt,conference]{ieeeconf}

\usepackage[nocompress]{cite}
\usepackage{amsmath,amssymb}
\usepackage{graphicx}
\usepackage{balance}
\usepackage{dblfloatfix}
\usepackage[caption=false,font=footnotesize]{subfig}
\usepackage{diagbox}
\usepackage{comment}
\makeatletter
\let\labelindent\relax
\makeatother
\usepackage[shortlabels]{enumitem}

\usepackage[hidelinks]{hyperref}
\usepackage{booktabs}
\usepackage{siunitx}
\usepackage[most]{tcolorbox}

\usepackage[capitalize,nameinlink,noabbrev]{cleveref}

\usepackage{xcolor}
\definecolor{customblue}{RGB}{24,85,155}
\definecolor{customgreen}{RGB}{55,120,39}

\usepackage{etoolbox}
\newtoggle{ano}
\newtoggle{ext}

\togglefalse{ano}

\newtheorem{theorem}{Theorem}

\newtheorem{definition}{Definition}

\newtcolorbox{highlightbox}{
  colback=blue!10!gray!10,
  left=1pt,
  right=1pt,
  top=1pt,
  bottom=1pt,
  boxrule=1pt,
}

\IEEEoverridecommandlockouts

\title{\LARGE \bf
Learning Actuator-Aware Spectral Submanifolds \\ for Precise Control of Continuum Robots
}

\iftoggle{ano}{
\author{\textit{Authors List Omitted for Anonymized Review$^{1}$}%
\vspace{2em}
\thanks{$^{1}$ Formatting placeholder.
\vspace{3em}}
}
}{
\author{Paul Leonard Wolff$^{1,3}$, Hugo Buurmeijer$^{1}$, Luis Pabon$^{1}$, John Irvin Alora$^{1}$, Mark Leone$^{1}$, \\ Roshan S.~Kaundinya$^{2}$, Amirhossein Kazemipour$^{3}$,
Robert K. Katzschmann$^{3}$ and Marco Pavone$^{1,4}$%
\thanks{$^{1}$ Autonomous Systems Lab, Stanford University, USA. Emails: {\tt\small \{wolffp, hbuurmei, lpabon, jjalora, mleone, pavone\} @stanford.edu}.}%
\thanks{$^{2}$ Institute of Mechanical Systems, ETH~Zurich, Switzerland. Email: {\tt\small roshan.kaundinya@mavt.ethz.ch}.} %
\thanks{$^{3}$ Soft Robotics Lab, ETH~Zurich, Switzerland. Emails: {\tt\small \{wolffp, akazemi, rkk\}@ethz.ch}.}%
\thanks{$^{4}$ NVIDIA Research, Santa Clara, USA.}%
}
}

\begin{document}

\maketitle
\thispagestyle{empty}
\pagestyle{empty}

\begin{abstract}
Continuum robots exhibit high-dimensional, nonlinear dynamics which are often coupled with their actuation mechanism.
Spectral submanifold (SSM) reduction has emerged as a leading method for reducing high-dimensional nonlinear dynamical systems to low-dimensional invariant manifolds. 
Our proposed control-augmented SSMs (caSSMs) extend this methodology by explicitly incorporating control inputs into the state representation, enabling these models to capture nonlinear state-input couplings. 
Training these models relies solely on controlled decay trajectories of the actuator‑augmented state, thereby removing the additional actuation‑calibration step commonly needed by prior SSM‑for‑control methods.
We learn a compact caSSM model for a tendon-driven trunk robot, enabling real-time control and reducing open-loop prediction error by 40\% compared to existing methods.
In closed-loop experiments with model predictive control (MPC), caSSM reduces tracking error by 52\%, demonstrating improved performance against Koopman and SSM based MPC and practical deployability on hardware continuum robots.
\end{abstract}

\section{Introduction}

Continuum robots are constructed from compliant materials and actuated through distributed deformation rather than rigid joints.
These designs enable safe interaction in human-centered environments while expanding the design space for applications including manipulators, soft grippers, and morphing structures \cite{Trivedi2008,Katzschmann2018}.
However, the same compliance that yields these benefits complicates modeling and control: large deformations, geometric nonlinearities, and state-dependent couplings are common.
Furthermore, continuum-mechanics ODEs discretize into high-dimensional systems, making real-time implementation of optimization-based methods like MPC computationally prohibitive \cite{RaoWR98,Malyshev2018}.

Model order reduction (MOR) seeks low-dimensional coordinates that capture a system’s essential behavior while enabling fast inference. Classical linear-subspace methods, such as proper orthogonal decomposition (POD) and principal component analysis (PCA), perform well when system dynamics align with a fixed global basis, but struggle with strong nonlinearities. 
Recent advances in nonlinear, data-driven reduction on low-dimensional spectral submanifolds (SSMs) have established them as a leading MOR technique that preserves essential nonlinear dynamics while remaining computationally tractable for real-time control\cite{haller2016nonlinear,CenedeseAxasYangEritenHaller2022,alora2025discovering}.

Equally important alongside MOR is recognizing that, in compliant robots, the actuator channel itself is dynamic. Tendons, pneumatic lines, valves, and transmissions introduce delays and bandwidth limits, while friction, stiction, saturation, and hysteresis cause inputs to enter the system dynamics in nonlinear, state-dependent ways \cite{Do2017PerformanceControl}.
Ignoring these effects leads to underfitting of transients and degraded predictive performance, whereas jointly modeling actuator and system dynamics yields a model that better captures the dynamics most relevant for control.

Based on these observations, our goal is to learn reduced-order models for high-dimensional systems and their actuators with dynamics of the kind:
\begin{figure}[t!]
    \centering
    \includegraphics[width=\linewidth]{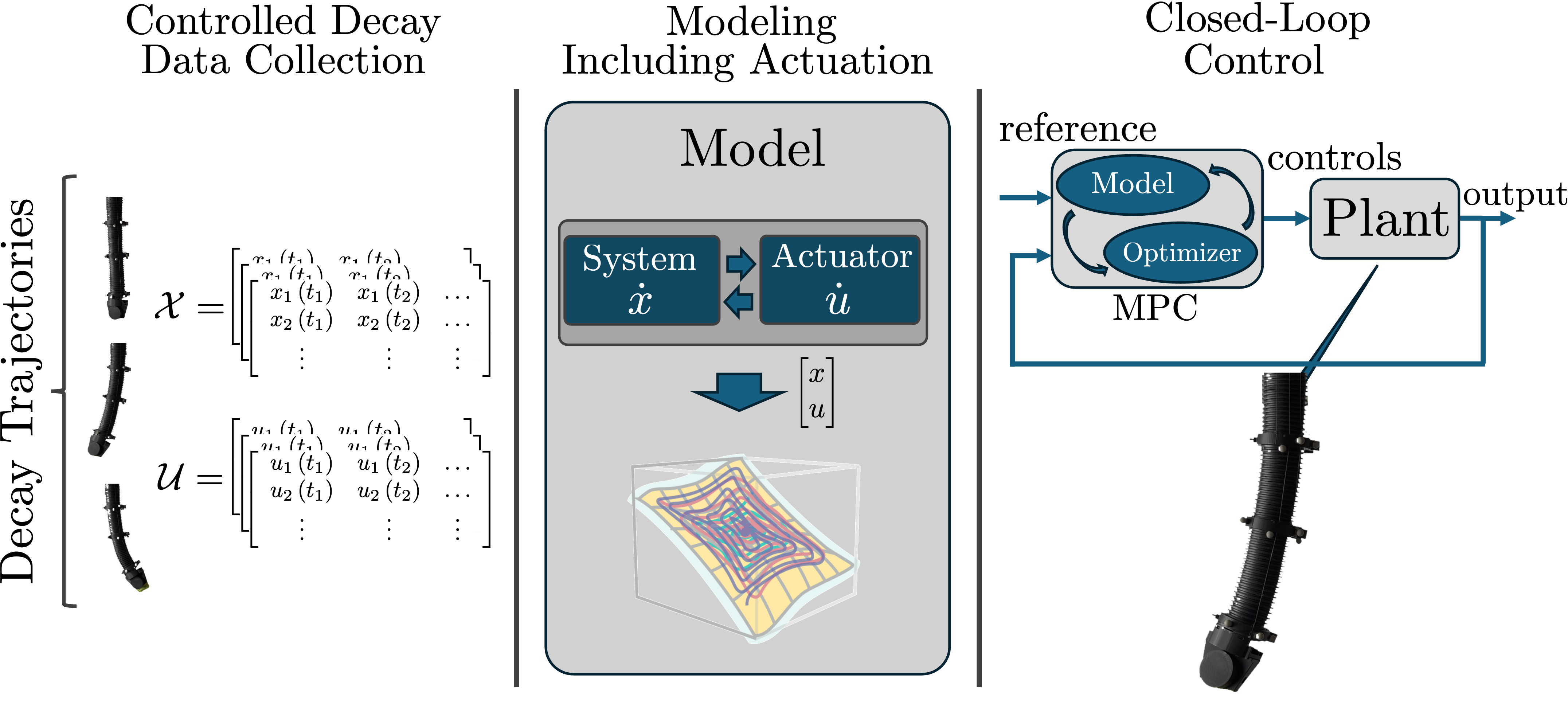}
    \caption{Workflow for deploying SSMs on hardware for closed-loop control: (a) fully automated decay-based data collection, (b) actuator-aware manifold modeling from data, (c) real-time MPC rollout on hardware.}
    \label{fig:title_figure}
    \vspace{-1.1em}
\end{figure}
\begin{equation}\label{eq:general_ode_intro}
    \begin{aligned}
        \dot{x}(t) &= f\big(x(t),u(t)\big),\\
        \dot{u}(t) &= \Lambda\big(u(t)-u_{\mathrm{ref}}(t)\big),
    \end{aligned}
\end{equation}
where $x(t)\in\mathbb{R}^{n_f}$ denotes the full-order state, 
$u(t)\in\mathbb{R}^{m}$ the control input applied through the actuators, and $f:\mathbb{R}^{n_f} \times \mathbb{R}^{m}\to\mathbb{R}^{n_f}$ the nonlinear dynamics. 
We model actuator dynamics as a first-order ODE with the matrix $\Lambda \in \mathbb{R}^{m\times m}$ capturing the dominant behavior of servo-driven tendon actuation. 
For pressure-driven fluidic actuators with higher-order or non-linear couplings, the state can be augmented with additional actuator-side states (e.g., pressure).  $u_{\mathrm{ref}}(t)$ denotes the reference input commanded to the actuators.

To tackle this problem, we introduce \textit{control-augmented Spectral Submanifolds }(caSSMs).
Our contributions are:
\begin{enumerate}[leftmargin=*]
  \item \textbf{Framework}: caSSM embeds actuator states within the full state space to yield reduced-order models that capture coupled system–actuator dynamics.  
  \item \textbf{Theory}: derivation of external control contribution and linear actuator dynamics within caSSM.
  \item \textbf{Pipeline}: an automated procedure that produces controller-ready models from decay trajectories (Fig.~\ref{fig:title_figure}).
  \item \textbf{Validation}: experiments on a tendon-driven trunk robot showing 40\% lower prediction error and 52\% higher tracking accuracy than state-of-the-art Koopman- and SSM-based MPC baselines.
\end{enumerate}

\noindent\textbf{Outline:} We survey related work in \Cref{sec:related_work} and provide relevant SSM preliminaries in \Cref{sec:background_ssm}.
Next, \Cref{chap:caSSM} introduces the caSSM formulation and derives the inclusion of actuator dynamics.
We detail the hardware data pipeline and learning procedure in \Cref{sec:deploy_cassm}.
Finally, we evaluate open-loop prediction and closed-loop MPC on a tendon-driven trunk robot in \Cref{sec:experiments} and conclude in \Cref{sec:conclusion_outlook}.


\section{Related Work}
\label{sec:related_work}

Modeling soft robots spans physics-based discretizations and data-driven surrogates. Finite element analysis (FEA) couples nonlinear hyperelasticity with time integration to handle complex geometries, contacts, and boundary conditions accurately. However, its state dimension and per-step solve costs are prohibitive for real-time, optimization-based control \cite{Duriez2013}.
MPC with horizon \(T\), \(n\) states, and \(m\) inputs yields dense quadratic programs scaling as \(\mathcal{O}\!\big(T^3(n{+}m)^3\big)\), reducible via block-tridiagonal structure to \(\mathcal{O}\!\big(T(n{+}m)^3\big)\) \cite{RaoWR98}.
This cubic dependence on \(n\) motivates MOR.

Projection-based MOR methods, such as proper orthogonal decomposition (POD), reduce dynamics onto linear subspaces and are optimal for linear systems, yet their accuracy quickly deteriorates under strong nonlinearities. Trajectory piecewise linearization (TPWL) extends applicability through multiple local linearizations but introduces residual modeling errors \cite{KunischVolkwein2002,RewienskiWhite2003,TonkensLorenzettiPavone2021}.
To overcome such limitations, snapshot-based, data-driven MOR instead extracts low-order structure directly from trajectories: dynamic mode decomposition (DMD) fits linear operators to observed dynamics \cite{Schmid2010}, while Koopman/Extended DMD (EDMD) lifts states into feature spaces to represent nonlinear dynamics linearly \cite{Williams2015,BruderRV19}. In both cases, performance hinges on the choice of observables or basis functions, where richer feature sets capture more dynamics but inflate dimensionality and hinder real-time MPC integration.



To capture nonlinearities beyond linear-operator approximations, machine learning approaches like Gaussian processes or deep networks learn input–deformation maps and embed ODE residuals for consistency. But they may require large curated datasets and performance can degrade under distribution shift, e.g., when extrapolating to unseen trajectories, amplitudes, or operating conditions \cite{GillespieBestTownsendWingateKillpack2018, YangDuriez2021}. Related hybrid models combine analytical models with learned corrections to improve data efficiency and extrapolation, but inherit modeling bias from the chosen physics prior \cite{Reinhart2017Hybrid}.

As an alternative to purely data-driven regression, SSMs provide a principled, dynamics-based reduction: they are smooth, low-dimensional invariant manifolds that capture the dominant attracting dynamics with few coordinates, generalizing slow eigenspaces to nonlinear settings. They can be identified directly from trajectory data using libraries like \texttt{SSMLearn} and \texttt{fastSSM} \cite{CenedeseAxasYangEritenHaller2022, haller2016nonlinear}.
However, while SSM-based approaches have also been extended to non-autonomous systems, including recent work demonstrating effective control after additional calibration, actuator effects and input dynamics are typically not modeled explicitly \cite{alora2025discovering,Kaundinya2025aSSM,buurmeijer2025taminghighdimensionaldynamicslearning}. Instead, control is often introduced through a separate calibration step. In contrast, our formulation incorporates actuator dynamics directly into the model from the outset. It thereby captures coupled actuation–system interactions while avoiding the additional experiments that post hoc calibration would require.

\section{Background on Spectral Submanifolds}
\label{sec:background_ssm}

We first consider Eq.~\eqref{eq:general_ode_intro} without external control.
Taking the Jacobian of the dynamics $f$ with respect to the full state at the origin gives \(A:=D_x f(0,0) \in\mathbb{R}^{n_f\times n_f}\), such that we can write the autonomous dynamics as
\begin{equation}\label{eq:autonomous_dyn}
    \dot x(t) = f_\mathrm{aut}(x(t)) := A x(t)+f_\mathrm{nl}(x(t)),
\end{equation}
where the nonlinear terms are grouped into $f_\mathrm{nl}$ and
$A$ is assumed to be Hurwitz.
We then define the associated slow linear subspace as
\[
  E := E_{j_1}\oplus \cdots \oplus E_{j_n},
\]
where each $E_{j_k}$ is the real eigenspace associated with the eigenvalue $\lambda_{j_k}$ of $A$.
The direct sum of eigenspaces is chosen so that
\[
    \min_{\lambda\in\Sigma_E}\text{Re}(\lambda)\;>\;\max_{\lambda\in\Sigma_{\text{out}}}\text{Re}(\lambda),
\]
with $\Sigma_E$ being the set of eigenvalues corresponding to $E$, and $\Sigma_{\text{out}}$ being the complementary set of faster eigenvalues.
From here, we define (adopted from \cite{haller2016nonlinear}): 
\begin{definition}[Autonomous SSM]
The \emph{autonomous SSM}, \(W(E)\), is the smoothest nonlinear continuation of \(E\) that remains invariant under the autonomous dynamics, with \(\dim W(E)=\dim(E)=:n\).
\end{definition}

For the dynamics in \eqref{eq:autonomous_dyn}, this invariance is formally expressed as
\begin{equation}\label{eq:ssm_invariance}
  x(0)\in W(E)\;\Longrightarrow\; x(t)\in W(E)\quad \forall ~t\in\mathbb{R}^+.
\end{equation}

Because trajectories remain confined to \(W(E)\), the manifold can be parametrized by reduced coordinates with corresponding mappings to and from the ambient state space:
\begin{align}
  z &= v(x), \label{eq:ssm_map_v}\\
  x &= w(z), \label{eq:ssm_map_w}
\end{align}
where \(z\in\mathbb{R}^n\) are reduced coordinates and \(x\in\mathbb{R}^{n_f}\) are the full coordinates, with invertibility on the manifold:
\[
  x=(w\circ v)(x),\qquad z=(v\circ w)(z).
\]
SSMs thus provide nonlinear, low-dimensional coordinates in which the dominant system behavior can be modeled efficiently.

To extend this approach to controlled systems, most SSM-based models assume control-affine dynamics \cite{alora2025discovering,Kaundinya2025aSSM,buurmeijer2025taminghighdimensionaldynamicslearning}
\begin{equation}\label{eq:full_regular_system}
  \dot x(t) = f_\mathrm{aut}(x(t)) + B(x(t))u(t),
\end{equation}
which we will relax by focusing on the generic form of \eqref{eq:general_ode_intro}.

\subsection{Fitting SSMs from Observational Data}
For high-dimensional systems, such as soft robots, one generally cannot observe their full state.
Therefore, we instead construct the SSM in the observable space of measurements, which can be made sufficiently large by adding time-delay embeddings. 
The fitting of SSMs from observational data generally involves three steps: 1) fitting the chart \eqref{eq:ssm_map_v} and parameterization maps \eqref{eq:ssm_map_w}, 2) finding the on-manifold dynamics, and 3) calibrating the dynamics to include the effect of control inputs.
The first two steps use data obtained by launching the robot from various initial conditions and recording its autonomous decay trajectories.
The slow spectral subspace, $E$, can directly be found from these observations by performing a PCA and selecting the $n$ most dominant components.
This defines the chart map as
\begin{equation}\label{eq:oSSM_v}
    z = v(x_o) := V^\top x_o ,
\end{equation}
where $x_o \in \mathbb{R}^o$ is the observed state vector and the columns of $V \in \mathbb{R}^{o \times n}$ span $E$.
Assuming analyticity of $f_{\mathrm{nl}}$, $W(E)$ admits a convergent Taylor expansion over $E$, motivating a finite-order polynomial feature map for the parameterization \cite{haller2016nonlinear}, i.e.,
\begin{equation}\label{eq:oSSM_w}
    x_o = w(z) := V z + W_\text{nl} z^{2:n_w} ,
\end{equation}
where $z^{2:n_w}$ collects all monomials of $z$ from order $2$ up to $n_w$, and the coefficient matrix $W_\text{nl}$ is obtained via polynomial regression.
The reduced dynamics similarly is
\begin{equation}\label{eq:oSSM_r}
    \dot{z} = R_0 z + R_\text{nl} z^{2:n_r} + B_r u ,
\end{equation}
where $z^{2:n_r}$ denotes monomials of order $2$ through $n_r$, $R_0$ and $R_\text{nl}$ are fitted on autonomous data via polynomial regression, and $B_r$ on controlled data via least-squares (see \cite{alora2025discovering}).
We will refer to these resulting models as the \textit{origin SSMs} (oSSMs) to differentiate from the proposed approach.


\subsection{Assumptions and Practical Limitations}

SSM-based reductions adopt the following modeling assumptions:
(i) manifold-shaping dynamics arise solely from the system’s intrinsic behavior, neglecting actuator interactions \cite{haller2016nonlinear};
(ii) actuator dynamics are much faster than the system, enabling near-instant setpoint tracking \cite{alora2025discovering};
(iii) the control influence on dynamics is affine (linear input contribution) as in \eqref{eq:full_regular_system}.


In robotic systems, these assumptions are often violated \cite{Do2017PerformanceControl,Ma2022ElectricallyDrivenSoftActuators}. In particular, slow actuator dynamics can be non-negligible and therefore affect the choice of modal basis used for the reduced-coordinate representation. To assess when this matters, Fig.~\ref{fig:eigenvalue_distribution} gives a practical bandwidth diagnostic through the relative location of actuator modes (blue) and robot modes (green): when the actuator modes are well separated on the fast side, the actuators can be approximated as ideal. If the spectra overlap, actuator dynamics should be modeled explicitly. When actuator modes are slower than the robot modes of interest, control authority becomes insufficient for reliable tracking. Consequently, omitting actuator modes can yield reduced coordinates that miss relevant physical behavior and lead to significant model mismatch.

These limitations also affect data collection.
Slow actuators may be incapable of providing step inputs to generate decay trajectories that trace out the manifold, necessitating manual displacements to create nonzero initial conditions followed by instant release, thereby limiting practicality.
Finally, restricting control effects to affine terms impedes capturing nonlinear actuation characteristics (e.g., friction) common in tendon-driven soft robots.

\begin{figure}[b]
    \centering
    \includegraphics[width=0.9\linewidth]{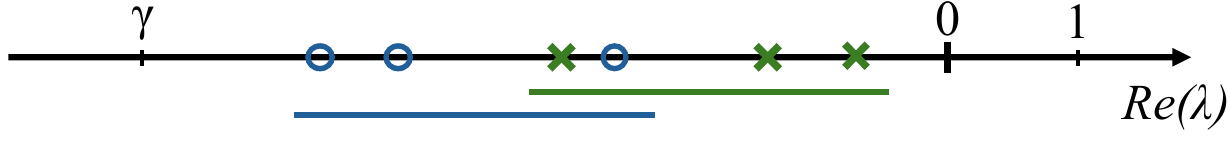}
    \caption{Eigenvalue layout: slow system modes are shown in \textcolor{customgreen}{green} and actuator modes in \textcolor{customblue}{blue}. The region left of \(\gamma \ll 0\) indicates rapidly decaying modes that can safely be neglected. Spectral overlap of modes signals that neglecting actuator dynamics can induce significant modeling errors.}
    \label{fig:eigenvalue_distribution}
\end{figure}

\section{Control-Augmented SSMs}
\label{chap:caSSM}

We introduce \emph{control-augmented SSMs} (caSSMs), a reduced modeling framework that explicitly embeds actuator dynamics into the SSM formulation.

To capture actuator behavior, we augment the state with the control input:
\[
  \tilde{x} \;=\; \begin{bmatrix} x \\ u \end{bmatrix} \in \mathbb{R}^{n_f+m}.
\]
By linearizing the state dynamics from \eqref{eq:general_ode_intro} around the origin, we obtain the (open-loop) system–actuator model
\begin{equation}
\begin{gathered}
  \dot{\tilde{x}}
  = 
  \begin{bmatrix} \dot{x} \\ \dot{u} \end{bmatrix} 
  =
  \underbrace{\begin{bmatrix} A & A_u \\ 0 & \Lambda \end{bmatrix}}_{\text{linear}}
  \begin{bmatrix} x \\ u \end{bmatrix}
  + 
  \underbrace{\begin{bmatrix} g(x,u) \\ 0 \end{bmatrix}}_{\text{nonlinear}}
  -
  \underbrace{\begin{bmatrix} 0 \\ \Lambda\,u_{\mathrm{ref}}(t) \end{bmatrix}}_{\text{setpoint drive}},\\[0.9ex]
\end{gathered}
\label{eq:augmented_general}
\end{equation}
with $\Lambda$ being Hurwitz. We thereby replace the control-affine assumption by a general coupling via a linear cross term $A_u \in \mathbb{R}^{n_f\times m}$ and a nonlinear term $g:\mathbb{R}^{n_f} \times \mathbb{R}^{m}\to\mathbb{R}^{n_f}$, thus allowing nonlinear input dependence. When the coupling mechanism (e.g. tendons) introduces identifiable slow modes on time scales relevant to the robot dynamics, the corresponding internal states should be explicitly included in $x$. $u_{\mathrm{ref}}(t)$ encodes the desired control input setpoints, which we prescribe and optimize over, while $u$ is the actual input applied to the system.
The Hurwitz condition on \(\Lambda\) enforces dissipative actuator dynamics whose eigenvalues set an actuation rate limit. The spectrum of the linearized augmented system is the union of those of \(A\) and \(\Lambda\).

To describe the caSSM, we use a graph-style parameterization over reduced coordinates \(z\in\mathbb{R}^n\): \begin{align} z &= V^\top \tilde{x}, \qquad V^\top V = I, \label{eq:graph_z}\\ \tilde{x} &= w(z) \;=\; V z + w_{\mathrm{nl}}(z), \label{eq:graph_y}\\ \dot z &= r(z,t) \;=\; R_0 z + r_{\mathrm{nl}}(z) + r_{\mathrm{ref}}(t) \label{eq:red_dyn} \end{align}
where $r_\text{ref}(t)$ captures the effect of external control inputs to the reduced system. Similar to \cite{Haller2025}, we choose the SSM parameterization specified in \eqref{eq:graph_y} to be independent of time.
Moreover, an orthogonal projection is chosen in accordance with \cite{alora2025discovering}, while recent work has explored generalizing to oblique projections \cite{buurmeijer2025taminghighdimensionaldynamicslearning,Bettini2025ObliqueProjection}.

\subsection{Derivation of External Control Contribution}
\label{sec:derive_ref_term}

We derive how the reference inputs appear on the SSM by identifying the reduced input term \(r_{\mathrm{ref}}(t)\) that preserves manifold invariance under the augmented dynamics \eqref{eq:augmented_general}. Throughout, we adopt the SSM assumption that the reference input remains small and bounded, \(|u_{\mathrm{ref}}(t)| < \delta\), such that the manifold is approximately preserved under external forcing.

\begin{theorem}
\label{thm:r_ref}
Under the augmented dynamics \eqref{eq:augmented_general} and the graph parameterization \eqref{eq:graph_z}–\eqref{eq:red_dyn}, the reduced input term that preserves manifold invariance is
\begin{equation}
r_{\mathrm{ref}}(t)
\;=\;
V^\top
\begin{bmatrix}
0\\[2pt]-\,\Lambda\,u_{\mathrm{ref}}(t)
\end{bmatrix}.
\label{eq:r_ref}
\end{equation}
\end{theorem}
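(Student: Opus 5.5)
The plan is to write the invariance equation for the time-independent graph $\tilde x = w(z)$ under the forced augmented dynamics \eqref{eq:augmented_general}, and then project it onto the reduced coordinates with $V^\top$. Write \eqref{eq:augmented_general} compactly as $\dot{\tilde x} = F(\tilde x) + \tilde b(t)$. Here $F(\tilde x)$ collects the linear block matrix and the nonlinear term $[g;0]$, and $\tilde b(t) := [0;\,-\Lambda u_{\mathrm{ref}}(t)]$ is the setpoint drive.

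First, consider the unforced case $u_{\mathrm{ref}}\equiv 0$. The augmented system is then autonomous with Hurwitz linear part, since the spectrum is that of $A$ together with that of $\Lambda$. The autonomous SSM theory of \cref{sec:background_ssm} therefore applies. It supplies $w$ and $r_{\mathrm{aut}}(z)=R_0 z + r_{\mathrm{nl}}(z)$ satisfying the invariance equation $Dw(z)\,r_{\mathrm{aut}}(z) = F(w(z))$.

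Next, for the forced system, require that trajectories starting on $\tilde x = w(z)$ stay on it, with $z$ evolving by \eqref{eq:red_dyn}. Differentiating $\tilde x(t)=w(z(t))$ gives
\[
Dw(z)\,\big(r_{\mathrm{aut}}(z)+r_{\mathrm{ref}}(t)\big) = F(w(z)) + \tilde b(t).
\]
Subtracting the autonomous invariance equation cancels the $F$ terms and leaves $Dw(z)\,r_{\mathrm{ref}}(t) = \tilde b(t)$. Because the parameterization is a graph over $E$, the relations \eqref{eq:graph_z}--\eqref{eq:graph_y} give $V^\top w(z) = z$. Differentiating this yields $V^\top Dw(z) = I$. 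Applying $V^\top$ to both sides then gives $r_{\mathrm{ref}}(t) = V^\top \tilde b(t)$, which is \eqref{eq:r_ref}. Equivalently, one can apply $V^\top$ directly to $\dot{\tilde x}$, using $\dot z = V^\top \dot{\tilde x}$, and split the result into the state-dependent part, identified with $R_0 z + r_{\mathrm{nl}}(z)$, and the explicitly time-dependent part.

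The main obstacle is that a time-independent graph cannot be exactly invariant under a generic time-varying forcing. The residual $(I - Dw(z)V^\top)\,\tilde b(t)$, which is the component of the forcing normal to the manifold, need not vanish. I would handle this with the standing assumption $|u_{\mathrm{ref}}|<\delta$. Non-autonomous SSM theory (cf.\ \cite{haller2016nonlinear,Haller2025}) gives a forced manifold $w(z)+\mathcal O(\delta)$. The leading-order correction to the reduced dynamics is exactly the orthogonal projection of the forcing, and the normal residual affects only the $\mathcal O(\delta)$ shape correction, which is neglected by choosing $w$ independent of $t$. So "preserves manifold invariance" is to be read as exact tangential (projected) invariance, with the normal mismatch of order $\delta$. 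This interpretation needs to be stated explicitly in the proof.
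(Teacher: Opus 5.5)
Your proposal is correct, and its skeleton matches the paper's proof. You differentiate $\tilde x = w(z)$, equate with the ambient vector field, and remove the autonomous part; your subtraction of $D_z w(z)\,r_{\mathrm{aut}}(z) = F(w(z))$ is exactly the paper's ``isolating the time-dependent terms''. You then arrive at $D_z w(z)\,r_{\mathrm{ref}}(t) = \tilde b(t)$ and project with $V^\top$.

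The difference is in how you dispose of $D_z w(z)$. The paper argues locally: tangency and analyticity give $D_z w_{\mathrm{nl}}(z) = \mathcal{O}(\|z\|)$, so near the equilibrium it replaces $D_z w(z)$ by $V$ and then uses $V^\top V = I$. You instead use the graph identity $V^\top w(z) = z$, equivalently $V^\top w_{\mathrm{nl}} \equiv 0$, which is what makes \eqref{eq:graph_z} a left inverse of \eqref{eq:graph_y}. This gives $V^\top D_z w(z) = I$ exactly.

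Each route buys something different.
\begin{itemize}
\item Your version shows that the projected relation, and hence \eqref{eq:r_ref}, holds on the whole chart domain rather than only near the origin. It also puts the single genuine approximation in the normal residual $(I - D_z w(z)V^\top)\tilde b(t)$, consistent with the exact identity $\dot z = V^\top \dot{\tilde x}$.
\item The paper's version needs only tangency at the origin, not the exact constraint $V^\top w_{\mathrm{nl}} \equiv 0$. It therefore still applies when $w_{\mathrm{nl}}$ is fitted by unconstrained regression, at the price of being local.
\end{itemize}

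Your last paragraph contains one overstatement. Non-autonomous SSM theory does not give $V^\top \tilde b$ as the exact leading-order reduced forcing in the orthogonal chart. Take the linearization, with $M$ the block matrix in \eqref{eq:augmented_general} and $V_\perp$ an orthonormal basis of $E^\perp$. The forced SSM is then $\tilde x = Vz + V_\perp \eta^*(t)$ with $\eta^* = \mathcal{O}(\delta)$, and on it $\dot z = R_0 z + V^\top \tilde b(t) + V^\top M V_\perp\,\eta^*(t)$.

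The last term is of the same order $\delta$ as the retained forcing. It vanishes only if $E^\perp$ is $M$-invariant, which generically fails when $A_u \neq 0$. So the normal residual does feed back into the reduced dynamics at leading order; it is not confined to the shape correction.

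You should therefore present \eqref{eq:r_ref} as the paper does: as the consequence of the modeling choice of a time-independent $w$, i.e.\ tangential invariance. It should not be presented as the leading-order term of the forced-SSM reduction. This caveat does not affect your derivation of the stated formula.
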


\begin{proof}
On the manifold, \(\tilde{x}=w(z)\) by \eqref{eq:graph_y}. Differentiating and substituting \eqref{eq:red_dyn} gives
\begin{equation}
\label{eq:ref_chain}
\begin{aligned}
\dot{\tilde{x}}  &= V \dot{z} + D_z w_{\mathrm{nl}}(z)\dot z\\
      &= \bigl[V + D_z w_{\mathrm{nl}}(z)\bigr]
         \bigl(R_0z + r_{\mathrm{nl}}(z) + r_{\mathrm{ref}}(t)\bigr).
\end{aligned}
\end{equation}
The same derivative follows from the full augmented dynamics \eqref{eq:augmented_general} while plugging in \eqref{eq:graph_y}, giving
\begin{equation}
\label{eq:ref_ambient}
\begin{aligned}
\dot{\tilde{x}}
&= \begin{bmatrix}A & A_u\\[2pt] 0 & \Lambda\end{bmatrix}
   \bigl(Vz + w_{\mathrm{nl}}(z)\bigr) \\
&\quad + \begin{bmatrix}g(x,u)\\[2pt]0\end{bmatrix}
\;-\; \begin{bmatrix}0\\[2pt]\Lambda\,u_{\mathrm{ref}}(t)\end{bmatrix}.
\end{aligned}
\end{equation}
Equating \eqref{eq:ref_chain} and \eqref{eq:ref_ambient} and isolating the time-dependent terms, we obtain
\begin{equation}
D_z w(z)\,r_{\mathrm{ref}}(t)
\;=\;
\begin{bmatrix}0\\[2pt]-\,\Lambda\,u_{\mathrm{ref}}(t)\end{bmatrix}.
\label{eq:ref_collect}
\end{equation}

At this point, recall that the manifold parameterization is smooth and can be written as $w(z)=Vz+w_{\mathrm{nl}}(z)$ \eqref{eq:graph_y}. Its Jacobian therefore has the form
\[
D_z w(z) = V + D_z w_{\mathrm{nl}}(z),
\]
where $D_z w_{\mathrm{nl}}(0)=0$ and $\|D_z w_{\mathrm{nl}}(z)\|=\mathcal{O}(\|z\|)$. \footnote{Tangency gives $D_z w(0)=V$ hence $D_z w_{\mathrm{nl}}(0)=0$; analyticity yields $w_{\mathrm{nl}}(z)=\mathcal{O}(\|z\|^2)$ and thus $D_z w_{\mathrm{nl}}(z)=\mathcal{O}(\|z\|)$ \cite{haller2016nonlinear}.}

Close to the equilibrium, the nonlinear contribution is higher order in \(z\) such that we approximate
\begin{equation}
V\,r_{\mathrm{ref}}(t)
\;=\;
\begin{bmatrix}0\\[2pt]-\,\Lambda\,u_{\mathrm{ref}}(t)\end{bmatrix}.
\label{eq:ref_collect_approx}
\end{equation}
Left-multiplying by \(V^\top\) and using \eqref{eq:graph_z} yields \eqref{eq:r_ref}.
\end{proof}
This result is consistent with a more general statement found in \cite{Haller2025}, Section 4.2.

\subsection{Obtaining Linear Actuator Dynamics}
\label{sec:lambda_ident}

Inspecting the derived term \eqref{eq:r_ref} shows its dependency on the $\Lambda$-matrix, the linear actuator dynamics. However, this is an unknown quantity and shall be identified.
To do so, we examine the linear terms in $z$ when equating \eqref{eq:ref_chain} with \eqref{eq:ref_ambient}, yielding the linear invariance condition
\begin{equation}
  V R_0 \;=\; \begin{bmatrix} A & A_u \\ 0 & \Lambda \end{bmatrix} V .
  \label{eq:inv_linear}
\end{equation}

Practically, the matrix \(R_0 \in \mathbb{R}^{n \times n}\) is approximated by taking the Jacobian of the identified reduced dynamics, with the caSSM dimension $n$ chosen to include the modes associated with actuation, as illustrated in Fig.~\ref{fig:eigenvalue_distribution}.

Let $E_u \in \mathbb{R}^{m\times(n_f+m)}$ be defined as $E_u = \bigl[\,0_{m\times n_f}\;\; I_m\,\bigr]$,
which extracts the actuator components.
We can decompose $V = \begin{bmatrix} V_x^\top , V_u^\top \end{bmatrix}^\top \in \mathbb{R}^{(n_f+m)\times n}$, such that left-multiplying \eqref{eq:inv_linear} with $E_u$ gives
\begin{equation*}
    \begin{aligned}
        E_u V R_0 &= E_u \begin{bmatrix} A & A_u \\ 0 & \Lambda \end{bmatrix} V ,\\
        V_u R_0 &= \Lambda V_u .
    \end{aligned}    
\end{equation*}

Assuming $\mathrm{rank}(V_u)=m$, due to appropriate tuning, the linear actuator dynamics follows from least squares as
\begin{equation}
\label{eq:Lambda_hat}
    \Lambda \;=\; (V_u R_0)\,V_u^{\dagger} \;=\; (E_u V\,R_0)\,(E_u V)^{\dagger} ,
\end{equation}
where $(\cdot)^\dagger$ denotes the Moore–Penrose pseudoinverse.
In essence, \(V\) captures dominant robot-actuator motion patterns. 
Projecting them onto the actuator coordinates and fitting the best linear map yields the actuator dynamics matrix \(\Lambda\) consistent with manifold invariance.

Alternatively, future work could treat \(\Lambda\) as a design parameter with additional degrees of freedom:
(i) placing real eigenvalues near measured actuator rates for realism; 
(ii) choosing \(\text{Re}(\lambda(\Lambda))\) slightly more negative than dominant system modes to separate time scales and aid identification; 
(iii) introducing mild oscillations via complex-conjugate pairs to excite weakly damped modes;
(iv) cautiously aligning actuator eigenfrequencies with systems resonances to amplify informative responses.
This design choice trades realism, numerical conditioning, and excitation richness for identification and control.

\subsection{Augmenting with Actuator-Side Feedback}

Besides data-driven identification, we can incorporate state feedback into the actuator dynamics via the linear gain $H$ or the nonlinear term $h(x,u)$. For SSM existence in Section \ref{chap:caSSM}, we considered the stable case ($A$ Hurwitz).
Now, we analyze actuator-assisted stabilization when the system's linearization is not Hurwitz, denoted by $A^{\mathrm{u}}$, leading to the augmented state dynamics
\begin{equation}
\label{eq:augmented_feedback}
\begin{gathered}
\dot{\tilde{x}} =
\begin{bmatrix} A^\mathrm{u} & A_u \\ H & \Lambda \end{bmatrix} \tilde{x}
+ \begin{bmatrix} g(x,u) \\ h(x,u) \end{bmatrix}
- \begin{bmatrix} 0 \\ \Lambda\,u_{\mathrm{ref}}(t) \end{bmatrix} .
\end{gathered}
\end{equation}

Linearizing the actuator channel by setting $h(x,u)\equiv 0$ and $u_{\mathrm{ref}}\equiv 0$ shows how the feedback acts:
\[
\begin{gathered}
\dot u = Hx + \Lambda u 
\Rightarrow\\[0.3em]
U(s) = (sI-\Lambda)^{-1} H\,X(s) = K(s)\,X(s).
\end{gathered}
\]
This yields a dynamic feedback filter with low-pass behavior, where $K(s)\approx -\Lambda^{-1}H$ for $s\approx 0$ and $K(s)\to 0$ as $|s|\to\infty$. 
Hence, the feedback strongly influences slow modes while attenuating high-frequency content, in line with the actuator bandwidth.

Let
\[
  \tilde{A} \;:=\; \begin{bmatrix} A^\mathrm{u} & A_u \\[2pt] H & \Lambda \end{bmatrix}.
\]
be the linearized augmented system around an unstable equilibrium of $A^{\mathrm{u}}$.
If $(A^{\mathrm{u}},A_u)$ is stabilizable, $\Lambda$ is Hurwitz, and the actuator influences the unstable modes of $A^{\mathrm{u}}$, then there exists $H$ such that $\tilde{A}$ is Hurwitz.
Practically, this requires actuator bandwidth to cover these modes: the eigenvalues of $\Lambda$ must lie sufficiently left of the unstable or slow modes so that $K(s)\approx -\Lambda^{-1}H$ holds over the frequency range of interest. Otherwise, control authority is insufficient. Appendix~\ref{app:stability} gives a sufficient condition and proof.
\section{Deploying caSSMs on Hardware}
\label{sec:deploy_cassm}

This section details the pipeline to identify and employ caSSM on hardware: (i) the data collection procedure; (ii) feature-map selection; (iii) control reference injection; and (iv) the reduced-order MPC formulation.

\subsection{Data Collection}
\label{sec:data}
To capture the dominant nonlinear behavior and system–actuator coupling, we collect trajectories that (a) cover a broad operating range and (b) excite transverse modes through orthogonal perturbations that generate oscillatory transients. Such experiments increase signal variance, reveal mode interactions, and make actuator-driven effects more identifiable. 
Raw measurements are denoised offline using a Kalman filter with Rauch–Tung–Striebel (RTS) smoothing, which stabilizes learning without inducing phase shifts. During online control, we use raw measurements, as non-causal filtering would reintroduce phase shifts and the controllers proved robust enough to handle the measurement noise.

Since the full state of a continuum robot cannot be measured, we restrict the model to measurable observables. We denote the observable augmented state by $\tilde{x}_o = \begin{bmatrix} x_o^\top , u^\top \end{bmatrix}^\top \in \mathbb{R}^{L(o+m)}$, 
which may include up to $L-1$ delay embeddings of the observed control-augmented state. For an appropriate choice of $L$ or $o$, we refer to \cite{alora2025discovering,CenedeseAxasYangEritenHaller2022}.

\subsection[Feature Maps for w\_nl and r\_nl]{Feature Maps for \(w_{\mathrm{nl}}\) and \(r_{\mathrm{nl}}\)}
The oSSM models use polynomial feature maps as seen in \eqref{eq:oSSM_w} and \eqref{eq:oSSM_r}.
However, with finite data and potential overfitting, this can cause divergence in certain data regimes.
Similar to \cite{kaszas2025jointreducedmodellaminar}, who used radial basis functions to train nonlinear SSM feature maps for capturing chaotic attractors, we revisited this design choice and evaluated three families of feature maps:
(1) low-degree polynomial monomials, (2) random Fourier features (RFFs) approximating an RBF kernel as described in \cite{rahimi2007random}, and (3) small neural networks.
In our ablation study, RFF-inspired cosine features achieved the highest the highest in-domain accuracy and the least out-of-domain divergence. Results and implementation details are provided in Appendix~\ref{app:feature_maps}.
Therefore, we use the cosine feature map motivated by the random Fourier features (RBF approximation) for all fitted nonlinear mappings.

\subsection{Control Reference Vector}

As for the observed augmented state, the control reference vector as defined in \eqref{eq:r_ref} shall be embedded.
For embedding depth \(L\), the controller maintains a buffer of the most recent inputs. The contribution per embedding is
\[
\tilde{J} =
\begin{bmatrix}
0_{o \times o} & 0_{o \times m} \\
0_{m \times o} & -\Lambda
\end{bmatrix},
\quad
\tilde{J}_L = I_L \otimes \tilde{J}
\;\in\; \mathbb{R}^{\,(L(o+m))^2},
\]
which places zeros on the system rows and \(-\Lambda u_{\mathrm{ref}}(\cdot)\) on the actuator rows.

With this definition, the control reference vector in reduced coordinates is
\begin{equation}
\label{eq:control_ref_vector_compact}
r_{\mathrm{ref}}(t)
= V^\top \, \tilde{J}_L \, \tilde{x}_o(t).
\end{equation}

At runtime, the controller optimizes the embedded \(u_{\mathrm{ref}}\), causing the MPC complexity to grow linearly with the number of delay embeddings.



As an illustration, for embedding depth \(L=2\) the expression becomes
\begin{align}
r_{\text{ref}}
&= 
V^\top
\begin{bmatrix}
  0_{o} \\
  -\,\Lambda\,u_{\mathrm{ref}}(t) \\
  0_{o} \\
  -\,\Lambda\,u_{\mathrm{ref}}(t - dt)
\end{bmatrix},
\label{eq:control_ref_vector_exact}
\end{align}
showing how the current input and its one-step delay appear in the embedded formulation.

For faster solves, we approximate the input history by repeating the current input across all lags. 
With the same block structure \(\tilde{J}_L\), the control reference vector becomes
\begin{equation}
\label{eq:control_ref_vector_approx}
r_{\mathrm{ref}}(t) \;\approx\; 
V^\top \, \tilde{J}_L \, (1_L \otimes u_{\mathrm{ref}}(t)).
\end{equation}
This keeps the number of decision variables independent of \(L\) (only the current input is optimized), 
yielding much faster MPC solves with only minor accuracy loss. 
For example, with \(L=2\) the approximation expands to
\begin{align}
r_{\text{ref, approx}}
&= 
V^\top
\begin{bmatrix}
  0_{o} \\
  -\,\Lambda\,u_{\mathrm{ref}}(t) \\
  0_{o} \\
  -\,\Lambda\,u_{\mathrm{ref}}(t)
\end{bmatrix}.
\label{eq:control_ref_vector_approx_expanded}
\end{align}

\subsection{Low-Dimensional MPC Formulation}
\label{sec:low_dim_mpc}

For closed-loop control on hardware, GuSTO-based MPC~\cite{gusto_mpc} is used and implemented in JAX to allow for just-in-time compilation.
The exact low-dimensional MPC formulation is given by
\begin{equation}
\hspace{-0.5em}
\begin{aligned}
  \min_{u_\text{ref}(\cdot)}\quad & \|\delta y_\text{perf}(t_f)\|_{Q_f}^2+\int_{t_0}^{t_f}\bigl(\|\delta y_\text{perf}(t)\|_Q^2+\|\Delta u(t)\|_{R_\Delta}^2\bigr)\,\mathrm{d}t,\\
  \text{s.t.}\quad 
    & z(t_0) = V^\top\bigl(\tilde{x}_o(t_0)\bigr),\\
    & \dot{z}(t) = r\bigl(z(t)) +  r_{ref}(t),\\
    & y_\text{perf}(t) = C\,w\bigl(z(t)\bigr) ,\\
    & y_\text{perf}(t)\in\mathcal{Y},\quad u(t)\in\mathcal{U} ,
\end{aligned}
\label{eq:low_dim_mpc}
\end{equation}
where $y_\text{perf} \in \mathbb{R}^p$ is the performance state, which is a subset of the observed states, i.e. $p<o$. The cost penalizes deviations of the performance trajectory $\delta y_\text{perf}(t) := y_\text{perf}(t) - y_\text{ref}(t)$ as well as variations in the control input $\Delta u(t) := u(t) - u(t- dt)$.
This formulation emphasizes computational tractability by restricting the optimization to the reduced coordinates while still enforcing feasibility in the full space.

\begin{figure*}[tbh]
    \centering
    \vspace{0.9em}
    \includegraphics[width=0.77\textwidth]{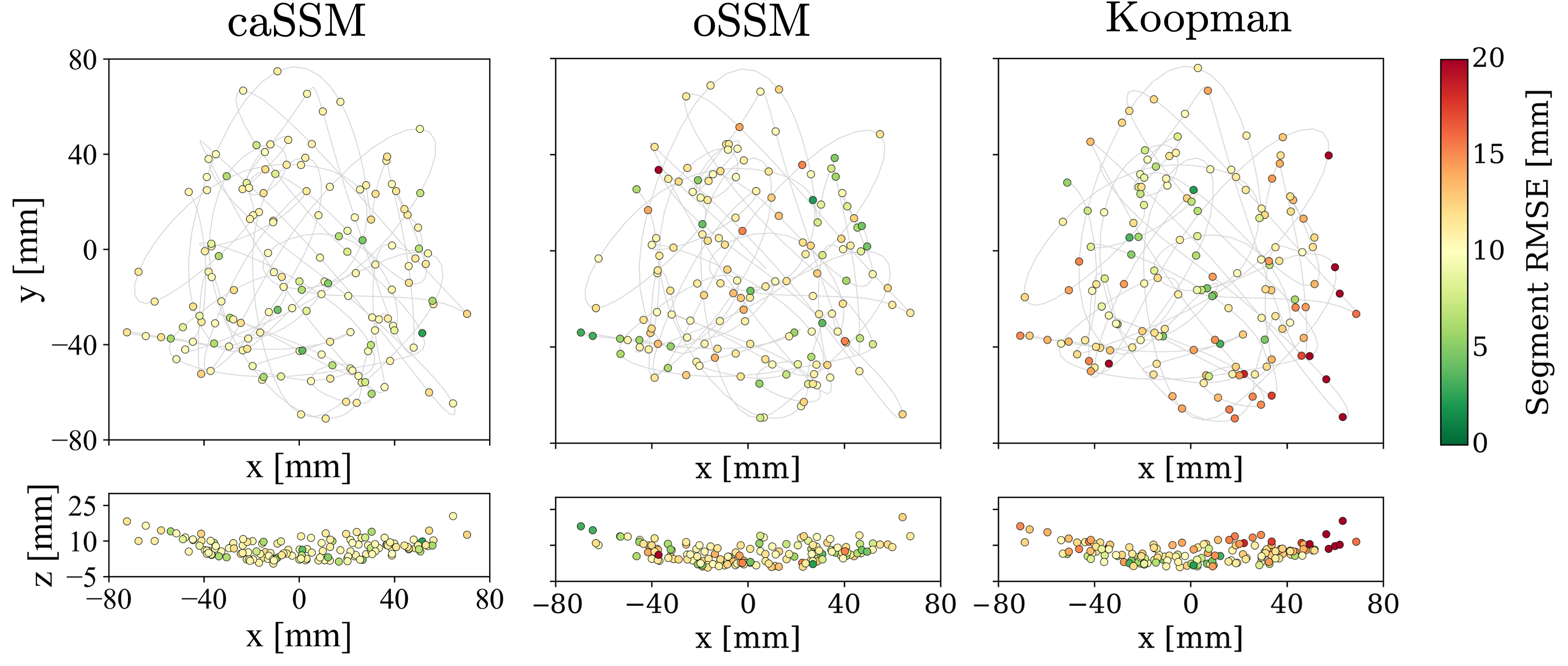}
    \vspace{-1.0em}
    \caption{RMSE of predictive results across three models for the same ground truth trajectory. caSSM shows lowest RMSE across all segments, while Koopman diverges near training data boundaries.}
    \label{fig:results_ol_predictions}
    \vspace{-1.0em}
\end{figure*}

\begin{figure}[b!]
    \centering
    \includegraphics[width=0.7\linewidth]{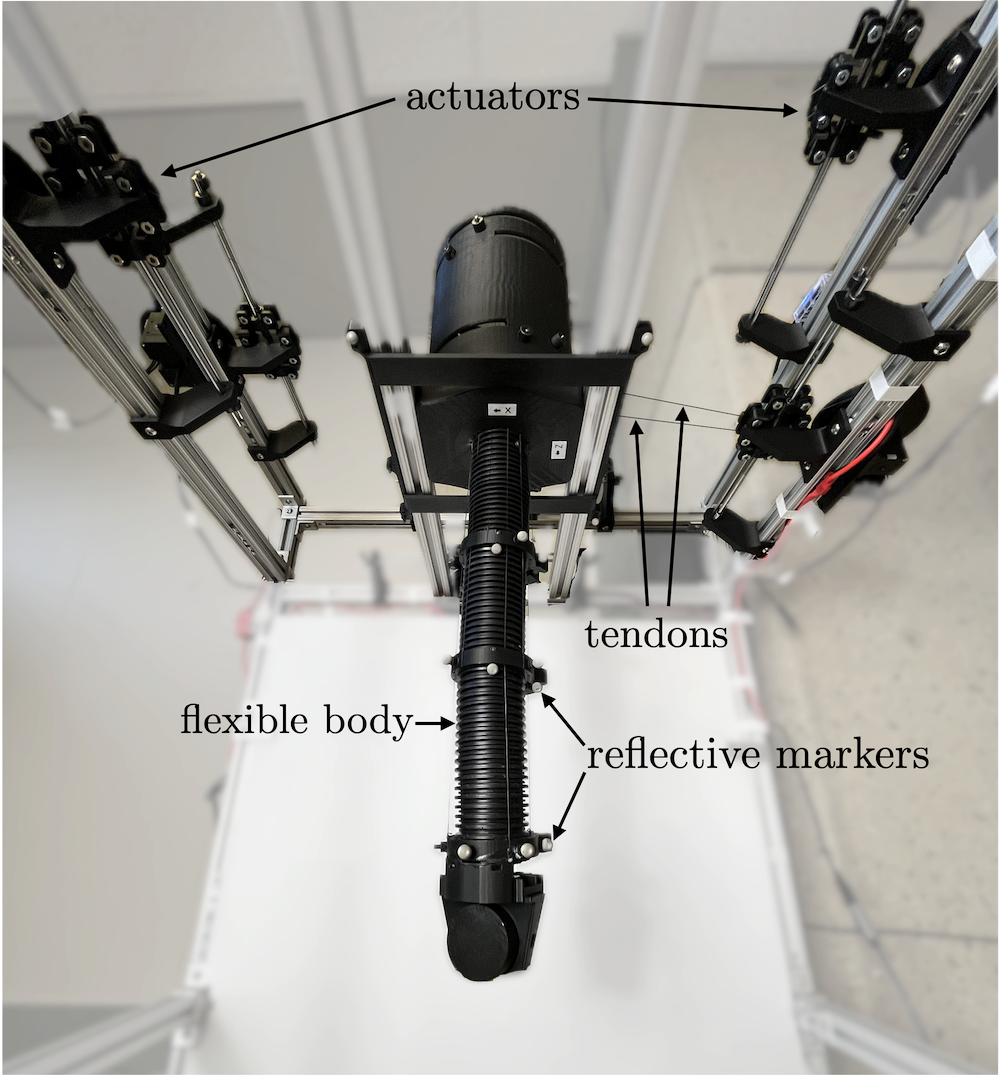}
    \caption{The tendon-driven trunk hardware system, shown in its rest position. The flexible body has a length of \qty{31}{\centi\meter}. Tendons extend from the actuators to rigid clamps attached to the flexible body. Observations were captured using an OptiTrack motion capture system.}
    \label{fig:trunk_hardware}
\end{figure}

\section{Experiments}
\label{sec:experiments}

This section presents our experimental evaluation.
We first detail the experimental setup and then demonstrate caSSM effectiveness in two experiments: open-loop predictive performance and closed-loop reference trajectory tracking. In both settings we benchmark against two baselines.

\subsection{Experimental Setup}
\label{sec:experimental_setup}
Experiments are performed on a tendon-driven continuum ``trunk” robot actuated by six servos in antagonistic tendon pairs, grouped into two independent inputs ($u \in \mathbb{R}^2$), with 3D position measurements coming from motion capture tracking of reflective markers placed on three segments along the soft body ($x_o \in \mathbb{R}^9$). We do not measure additional tendon-internal states explicitly, as no slow tendon-related modes are observed in the measured response. Residual tendon effects are therefore captured implicitly through the nonlinear coupling term $g(x,u)$.
Relevant nonidealities include actuation dead bands near the origin position, trial-to-trial variability (uncertainty bands), and hysteresis (path dependency).
The system is shown in Fig.~\ref{fig:trunk_hardware}.

To evaluate the proposed caSSM method, we compare three reduced-order models: (i) \textbf{caSSM}, a 7D manifold including actuator modes with $\Lambda$ recovered via~\eqref{eq:Lambda_hat} and two delay embeddings; (ii) \textbf{oSSM}, a 5D polynomial manifold (degree~2) with a separately calibrated affine control map following~\cite{buurmeijer2025taminghighdimensionaldynamicslearning}; and (iii) \textbf{Koopman}, using polynomial lifting (degree~2) and one delay, yielding a 120-dimensional lifted linear model following \cite{Williams2015}. All models are trained on data sampled at $\qty{50}{\hertz}$, and their hyperparameters are tuned for best performance on a shared validation set.\footnote{Baseline results for oSSM models with RFF features are omitted, as their performance within the training domain was comparable to that of polynomial features.}

\subsection{Open-Loop Predictive Performance}\label{sec:open_loop_model_predictions}

We assess the models' predictive capabilities independent of controller feedback using a finite-horizon \emph{random-actuation} protocol.
A \(\qty{15}{\second}\) random-input sequence is partitioned into 150 segments of 5 timesteps each (\(\qty{0.1}{\second}\)).
For each segment, models are initialized at the measured start state and simulated in open loop with RK4 at \(\Delta t=\qty{0.02}{\second}\).
We compute the RMSE between predicted and measured states, report the distribution across segments in Fig.~\ref{fig:results_ol_predictions}, and summarize the segment-averaged RMSE values in Table~\ref{tab:ol_rmse_models}.

The caSSM model achieved the lowest average RMSE of \(\qty{2.8}{\milli\meter}\) across random segments, maintaining accuracy under fast transients and varied initializations. For large-amplitude circular trajectories, accuracy decreases—likely due to hysteresis not captured by decay-based training—yet the model remains more numerically stable than the baselines, aided by the chosen feature map.

\begin{table}[htbp]
  \caption{Average RMSE [mm] for open-loop predictions.}
  \label{tab:ol_rmse_models}
  \vspace{-1.0em}
  \centering
  \begin{tabular}{ l ccc}
    \toprule
    Model & caSSM & oSSM & Koopman \\
    \midrule
    RMSE & \bfseries\num[detect-weight=true]{2.8} & \num{4.6} & \num{7.3} \\
    \bottomrule
  \end{tabular}
\end{table}

The oSSM model showed moderate performance (average RMSE of \(\qty{4.6}{\milli\meter}\)), with accuracy deteriorating primarily during rapid state transitions. For trajectories with significant hysteresis effects (particularly large-amplitude circular motions), oSSM occasionally outperformed caSSM, likely because its calibration step partially compensates for hysteresis phenomena not explicitly modeled.

The Koopman baseline exhibited the highest average RMSE of \(\qty{7.3}{\milli\meter}\) and the least consistent performance.
Its predictions degraded near the boundaries of the training data, with numerous segments showing numerical instability. 
This suggests that the Koopman model overfits to measurement noise rather than capturing the underlying dynamics.
\vspace{-0.25em}
\begin{highlightbox}
\centering \textbf{Key Takeaways}
\begin{enumerate}[label=(\roman*), leftmargin=*]
  \item Embedding actuator dynamics (caSSM) significantly improves both prediction quality and numerical stability on hardware.
  \item The decay-only training introduces a modeling bias for hysteresis effects; caSSM is most robust, while oSSM trades modeling bias for reduced robustness.
  \item Koopman lifting exhibits reliable performance in-distribution and for small-amplitude motions, but fails to generalize robustly to larger amplitudes.
\end{enumerate}
\end{highlightbox}

\subsection{Closed-Loop Tracking on Hardware}
\label{sec:cl_on_hardware}
We compare caSSM against oSSM and Koopman in a closed loop on the trunk robot. All controllers use the MPC formulation in~\eqref{eq:low_dim_mpc} and performance is reported as RMSE of end-effector tracking.

Both SSM models achieved the best results using the same MPC parameters: horizon \(N{=}\num{10}\) at \(\qty{50}{\hertz}\) model rate (actuation \(\qty{25}{\hertz}\)), state costs \(Q=\mathrm{diag}(\num{7},\num{7},\num{0})\), terminal costs \(Q_N=\mathrm{diag}(\num{20},\num{20},\num{0})\), input-change penalty \(R_\Delta=\mathrm{diag}(\num{0.16},\num{0.16})\), and no absolute input penalty. State and input constraints are imposed uniformly. The Koopman model with lifted dimension \(n{=}\num{120}\) achieved its best results in closed loop by using a shorter horizon (\(N{=}\num{2}\)), which maintained predictive stability while meeting real-time solve-time constraints. Its tuned costs are \(Q=\mathrm{diag}(\num{1},\num{1},\num{0})\), \(Q_N=\mathrm{diag}(\num{20},\num{20},\num{0})\), and \(R_{\Delta u}=\mathrm{diag}(\num{0.05},\num{0.05})\).

\begin{figure}[bt]
    \centering
    \vspace{0.5em}
    \includegraphics[width=0.445\textwidth]{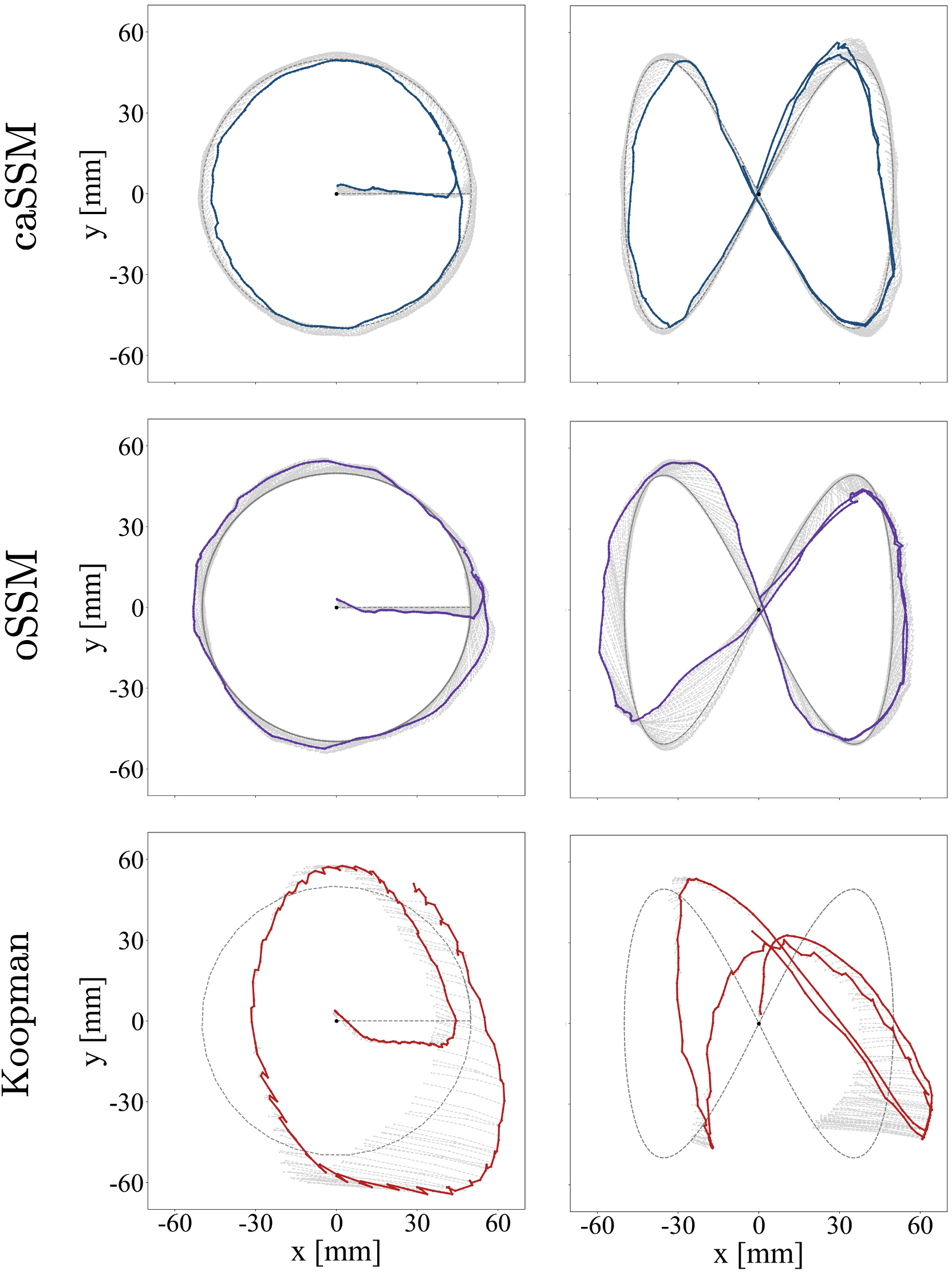}
    \vspace{-1em}
    \caption{Closed-loop MPC performance on circular and figure-eight trajectories.
Reference paths are shown in gray, with MPC rollouts in lighter gray. The Koopman model fails to track the reference, oSSM struggles on sharp curves while only caSSM tracks consistently across both shapes.}
    \label{fig:results_cl_predictions}
    \vspace{-1.5em}
\end{figure}

We compare tracking performance on circles and figure-eights of different radii at nominal (\(0.5\,\mathrm{rad/s}\)) and doubled (\(1\,\mathrm{rad/s}\), denoted "fast") angular velocity. Qualitative results for selected trajectories are shown in Fig.~\ref{fig:results_cl_predictions}, and RMSE values are summarized in Table~\ref{tab:rmse_models_cl}.

\begin{table}[b!]
  \caption{Average RMSE [mm] for closed-loop trajectories.}
  \label{tab:rmse_models_cl}
  \vspace{-1.0em}
  \centering
  \begin{tabular}{l ccc}
    \toprule
    Model & caSSM & oSSM & Koopman \\
    \midrule
    Circle \qty{5}{\centi\meter}       & \bfseries\num[detect-weight=true]{4.60} & \num{9.89} & \num{20.40} \\
    Circle \qty{5}{\centi\meter} – "fast" & \bfseries\num[detect-weight=true]{5.30} & \num{13.44} & \textit{Diverged} \\
    Circle \qty{8}{\centi\meter}        & \bfseries\num[detect-weight=true]{9.13} & \num{19.80} & \textit{Diverged} \\
    Circle \qty{10}{\centi\meter}       & \bfseries\num[detect-weight=true]{13.36} & \textit{Diverged} & \textit{Diverged} \\
    Figure-eight              & \bfseries\num[detect-weight=true]{4.40} & \num{8.64} & \num{26.72} \\
    Figure-eight – "fast"       & \bfseries\num[detect-weight=true]{8.12} & \num{15.10} & \textit{Diverged} \\
    \bottomrule
  \end{tabular}
  \vspace{-0.5em}
\end{table}

Among the compared models, caSSM achieves the highest accuracy and robustness, reaching \(4.6\,\mathrm{mm}\) RMSE on a \(5\,\mathrm{cm}\) circle (vs.\ \(9.89\,\mathrm{mm}\) for oSSM and \(20.40\,\mathrm{mm}\) for Koopman) and remaining stable at larger radii and higher speeds. At higher amplitudes, a small steady offset appears, consistent with hysteresis effects discussed in \Cref{sec:experimental_setup}, yet tracking remains symmetric and reliable. In contrast, oSSM tracks smaller and slower trajectories but degrades as amplitude and speed increase.
While the separate calibration step in oSSM can partly mitigate hysteresis effects, its overall robustness remains below that of caSSM.

Despite solving a \textit{linear} optimal control problem, the Koopman baseline has the longest average MPC solve time of \(64\,\mathrm{ms}\) and the fastest prediction divergence, which together require shorter horizons for robustness and real-time feasibility.
As a result, it can follow only small motions and yields the highest RMSE among successful runs. In contrast, caSSM’s added modeling capacity increases solve time only moderately, from \(14\,\mathrm{ms}\) for oSSM to \(18\,\mathrm{ms}\), while significantly improving tracking performance.


\vspace{-0.25em}
\begin{highlightbox}
\centering \textbf{Key Takeaways}
\begin{enumerate}[label=(\roman*), leftmargin=*]
\item Explicit actuator dynamics in caSSM expand the controllable workspace, improve tracking accuracy. 
\item caSSM enables more stable execution across diverse trajectories by providing more robust predictions.
\item Compared to Koopman, SSM-based models sustain real-time MPC with longer horizons, yielding improved closed-loop stability.
\end{enumerate}
\end{highlightbox}

\section{Conclusion and Outlook}\label{sec:conclusion_outlook}

By explicitly embedding actuator dynamics, our caSSMs close key modeling gaps in continuum robots and enable real-time control with minimal manual calibration.

Compared to regular SSM and Koopman baselines, caSSM delivers higher-fidelity open-loop predictions and superior closed-loop tracking with real-time MPC on a continuum robot hardware platform.
Practically, the pipeline streamlines data collection to a single automated step (decay trajectories), removes separate control calibration, and maintains tractable solve times via a simplified reference-injection scheme.
Together, these elements make caSSMs a practical and scalable tool for modeling and control of continuum robots.

Our approach exhibits two notable limitations. First, training exclusively on controlled decay trajectories can introduce residual hysteresis bias, which manifests in our system as small steady-state offsets in predictions of higher-amplitude trajectories.
Despite these offsets, the closed-loop system under MPC maintains both stability and symmetry properties.
Second, the method's effectiveness depends on accurate actuator mode identification; improperly identified actuator dynamics can degrade closed-loop performance beyond what would be observed without explicit actuator modeling.

This work motivates several extensions that we plan to investigate in future studies: (i) \textbf{Adiabatic extensions:} interpolating local models around forced steady states (as done in \cite{Kaundinya2025aSSM}) may improve setpoint projection in regions where global decays underrepresent effects like hysteresis. 
(ii) \textbf{Design-space tuning:} principled selection of actuator spectra \(\Lambda\) (and its excitation profile) to balance identifiability, numerical conditioning, and control authority. 
(iii) \textbf{Controller-side coupling:} structured design of \(H\) and \(h(x,u)\) for bandwidth-aware stabilization of unstable modes, including actuation-awareness.

\section*{Acknowledgments}
This work was partially supported by Toyota Motor Engineering \& Manufacturing North America (TEMA).
The views expressed in this article are solely those of the authors and do not necessarily reflect those of the supporting entity. 
We thank Patrick Benito Eberhard for valuable discussions on hardware evaluations.



\bibliographystyle{IEEEtran}
\bibliography{references}


\appendix

\subsection{Derivation of Conditions for Stabilizing Controller}\label{app:stability}

\paragraph*{Proposition}
Consider the augmented closed-loop matrix
\begin{equation}
\tilde A(H) \;:=\;
\begin{bmatrix}
A^{\mathrm{u}} & A_u \\
H & \Lambda
\end{bmatrix}.
\end{equation}
Suppose $(A^{\mathrm{u}},A_u)$ is stabilizable, and let $K$ be any matrix such that $A_c := A^{\mathrm{u}} + A_u K$ is Hurwitz (which exists by stabilizability).
Moreover, assume $\Lambda \preceq -\beta I$ for some $\beta>0$. If $\|K A_u\|_2 < \beta$, then for $H = K A^{\mathrm{u}} - \Lambda K + K A_u K$, the matrix $\tilde A(H)$ is Hurwitz.

\begin{proof}
Choose $K$ as in the statement, and consider the similarity transform
\[
T := \begin{bmatrix} I & 0 \\ -K & I \end{bmatrix}, \quad T^{-1} = \begin{bmatrix} I & 0 \\ K & I \end{bmatrix} ,
\]
which corresponds to the coordinate change $v := u - Kx$. Then
\[
\tilde A_\triangle \;:=\; T\,\tilde A(H)\,T^{-1}
\]
is similar to $\tilde A(H)$ and therefore has the same spectrum.

For the prescribed choice
\[
H := K A^{\mathrm{u}} - \Lambda K + K A_u K,
\]
a direct calculation gives
\[
\tilde A_\triangle
=
\begin{bmatrix}
A^{\mathrm{u}} + A_u K & A_u \\
0 & \Lambda - K A_u
\end{bmatrix}
=
\begin{bmatrix}
A_c & A_u \\
0 & \Lambda - K A_u
\end{bmatrix}.
\]

Hence $\tilde A_\triangle$ is block upper triangular, and therefore
\[
\sigma\!\bigl(\tilde A(H)\bigr)
=
\sigma(\tilde A_\triangle)
=
\sigma(A_c)\cup\sigma(\Lambda - K A_u).
\]
By construction, $A_c$ is Hurwitz. Moreover, since $\Lambda \preceq -\beta I$ and $\|K A_u\|_2 < \beta$, the matrix $\Lambda - K A_u$ is Hurwitz. Thus both diagonal blocks of $\tilde A_\triangle$ are Hurwitz, so $\tilde A_\triangle$ is Hurwitz. Because $\tilde A_\triangle$ is similar to $\tilde A(H)$, it follows that $\tilde A(H)$ is Hurwitz.
\end{proof}

Intuitively, the actuator must respond sufficiently faster than the feedback-induced coupling term $K A_u$. Otherwise, actuator lag weakens the effective feedback action and can prevent stabilization of the augmented system.

\subsection{Feature Maps}
\label{app:feature_maps}
\begin{figure}[t]
    \centering
     \includegraphics[width=0.85\linewidth]{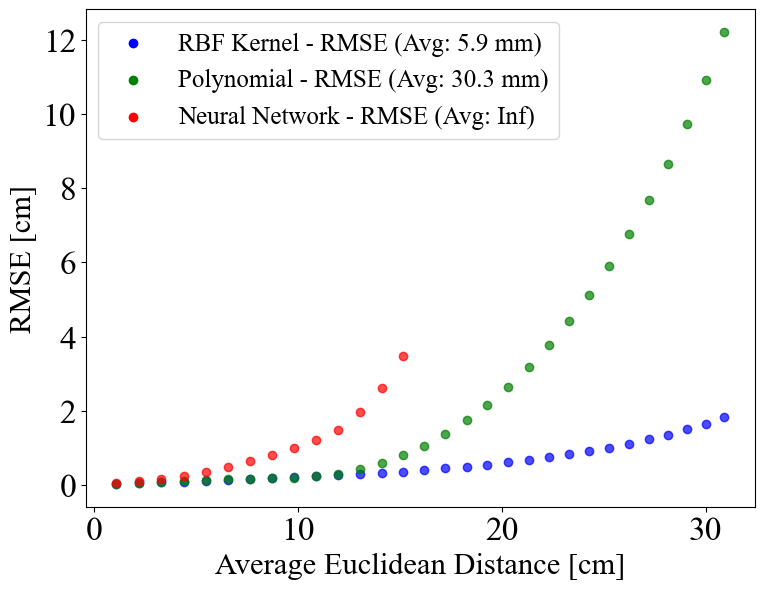}
    \caption{Test-set performance for models trained with three feature maps (RFF—blue, polynomials—green, neural networks—red). Training data covers states within \(15\,\mathrm{cm}\) of the origin.}
    \label{fig:featuremaps_results}
\end{figure}


Further comparative results for the different feature maps are shown in Fig.~\ref{fig:featuremaps_results}. Among the evaluated candidates, RFF-based feature maps proved the most numerically stable, especially in regions of unseen data. We therefore adopt RFF for all nonlinear mappings. These kernel-inspired cosine features approximate an RBF kernel yielding an implicit infinitely dimensional embedding space while avoiding the $\mathcal{O}(N^2)$ scaling of kernel-matrix methods \cite{rahimi2007random}. Concretely, $\omega$ and $b$ are sampled according to \eqref{eq:rff_sampling}, and the resulting feature map is defined in \eqref{eq:rff_features}.

\begin{equation}
\begin{aligned}
\omega_i &\sim p(\omega), \quad \text{(}\mathcal{N}(0,\ell^{-2}I)\text{ for RBF)}, &&\quad i=1,\dots,D,\\
b_i &\sim \mathrm{Uniform}[0,2\pi], &&\quad i=1,\dots,D.
\end{aligned}
\label{eq:rff_sampling}
\end{equation}

\begin{equation}
\begin{gathered}
\phi_{\mathrm{RFF}}(x) = \sqrt{\tfrac{2}{D}}\,
  \bigl[\cos(\omega_1^\top x{+}b_1),\dots,\cos(\omega_D^\top x{+}b_D)\bigr]^\top,\\[0.2em]
f(x) = \theta^\top \phi_{\mathrm{RFF}}(x).
\end{gathered}
\label{eq:rff_features}
\end{equation}
Here, $\ell$ denotes the kernel length scale, with larger values inducing smoother functions and smaller values enabling the representation of finer-scale variation. The parameter $D$ specifies the number of sampled features and governs the trade-off between approximation fidelity and computational complexity.

\noindent\textbf{Practical note:}
For real-time control, we implement models and MPC in \textit{JAX}.
While RFF are easy to code in JAX, manually sampling \(\omega,b\) with \texttt{jax.random} produced inconsistent accuracy, likely due to scaling and normalization mismatches.
Using \textit{scikit-learn}'s RFF to sample \(\omega,b\) and importing them into JAX was consistently reliable, highlighting its sensitivity to implementation details. The implementation is available at \url{https://github.com/StanfordASL/cassm}.

\end{document}